\newcommand{\alg}{\texttt{\textsc{BC-Max}}\xspace}
\theoremstyle{plain}
\newtheorem{theorem}{Theorem}[section]
\theoremstyle{definition}
\newtheorem{assumption}[theorem]{Assumption}
\theoremstyle{remark}
\theoremstyle{definition}
\newcommand{\cA}{{\mathcal A}}
\renewcommand{\P}{\ensuremath{\mathbb{P}}}
\newcommand{\cS}{\ensuremath{\mathcal{S}}}
\DeclareMathOperator{\E}{{\mathbb E}}
\DeclareMathOperator*{\argmax}{\text{argmax}}
\DeclareMathOperator*{\argmin}{\text{argmin}}
\newcommand{\reg}{\ensuremath{\text{Reg}}}
\icmltitlerunning{Offline Imitation Learning from Multiple Baselines}
\begin{document}

\twocolumn[
\icmltitle{Offline Imitation Learning from Multiple Baselines\\ with Applications to Compiler Optimization}



\icmlsetsymbol{equal}{*}

\begin{icmlauthorlist}
\icmlauthor{Teodor V. Marinov}{yyy}
\icmlauthor{Alekh Agarwal}{yyy}
\icmlauthor{Mircea Trofin}{comp}
\end{icmlauthorlist}

\icmlaffiliation{yyy}{Google Research, USA}
\icmlaffiliation{comp}{Google, USA}

\icmlcorrespondingauthor{Teodor V. Marinov}{tvmarinov@google.com}
\icmlcorrespondingauthor{Alekh Agarwal}{alekhagarwal@google.com}
\icmlcorrespondingauthor{Mircea Trofin}{mtrofin@google.com}

\icmlkeywords{Machine Learning, ICML}

\vskip 0.3in
]



\printAffiliationsAndNotice{} 

\begin{abstract}
This work studies a Reinforcement Learning (RL) problem in which we are given a set of trajectories collected with $K$ baseline policies. Each of these policies can be quite suboptimal in isolation, and have strong performance in complementary parts of the state space. The goal is to learn a policy which performs as well as the best combination of baselines on the entire state space. We propose a simple imitation learning based algorithm, show a sample complexity bound on its accuracy and prove that the the algorithm is minimax optimal by showing a matching lower bound. Further, we apply the algorithm in the setting of machine learning guided compiler optimization to learn policies for inlining programs with the objective of creating a small binary. We demonstrate that we can learn a policy that outperforms an initial policy learned via standard RL through a few iterations of our approach.
\end{abstract}

\section{Introduction}

When applying Reinforcement Learning (RL) to real-world applications, two key challenges often prove to be critical blockers to adoption. First is that the online interaction-then-update loop in conventional RL poses a significant engineering overhead in most large-scale systems, that are more naturally designed to take a static Machine Learning (ML) model as a dependency and update this model only periodically in an offline manner. Second is that RL algorithms typically begin tabula rasa, that is, they only leverage the information they glean about the task at hand through these online interactions. Typical scenarios, where the use of RL is often preceded by prior attempts using rule-based or supervised ML approaches, come with a treasure trove of valuable data about desirable and undesirable behaviors, ignoring which leads to undesirable sample complexity of learning from scratch for RL. More important, the previously tried decision making policies, even when individually suboptimal, provide a valuable source of insight into the plausibly good choices in many scenarios. In this work, we study the question of leveraging such prior policies and any data collected using them, without necessarily relying on online policy updates.

Given these shortcomings, offline RL~\citep{ernst2005tree,hester2018deep,kumar2020conservative,cheng2022adversarially}, where the agent just learns from a static dataset collected using some arbitrary policy, as well as hybrid protocols~\citep{song2022hybrid,haarnoja2018soft,silver2014deterministic} interpolating the fully online and offline settings have been proposed in the literature to take advantage of existing datasets, as well as to ease the requirement of completely online policy updates. In a different thread of work, the substantial literature on imitation learning~\citep{pomerleau1988alvinn,ross2011reduction,abbeel2004apprenticeship,ho2016generative} aims to leverage any existing policies that we seek to improve upon, along with the data collected using them. While imitation learning is studied in both online and offline settings, the particular scenario of having access to multiple policies of variable qualities that is of interest here, is only previously studied in an online setting~\citep{cheng2020policy,barreto2020fast}. 

In this paper, we formalize this question of having access to $K$ baseline policies $\pi_1,\ldots,\pi_K$, each of which can be quite suboptimal in isolation, and which we hope are strong in complementary parts of the state space. We further restrict ourselves to only receiving static datasets $D^i$ collected from each policy $\pi_i$, and seek to learn a policy which can combine the strengths of all the baseline policies. We are particularly interested in challenging settings where the underlying RL problem has a long horizon, and we only receive sparse trajectory-level feedback at the end of each trajectory. To motivate this setting, we consider a running example of optimizing the inlining decisions in a compiler. The horizon of the RL problem here corresponds to the number of callsites in the program or function being compiled., which can range from tens to tens of thousands. The reward of a trajectory is the size of the binary we obtain after compiling the entire function. The offline setting we study is extremely well motivated here, where each interaction with the RL environment involves the expensive operation of compiling and linking the program, and integrating this within the RL loop engenders a significant engineering overhead.. With this setup, our paper makes the following contributions:

\begin{itemize}
    \item A natural behavior cloning algorithm, \alg, that combines the multiple policies by executing each policy in every starting state in our dataset, and then imitating the trajectory of the policy with the highest reward in that state. We give an upper bound on the expected regret of the learned policy to the maximal reward obtained in each starting state by choosing the best baseline policy for that state.
    \item We complement our analysis with a lower bound showing that the result is unimprovable beyond polylogarithmic factors in our setting.
    \item We apply \alg to two different real-world datasets for the task of optimizing compiler inlining for binary size, and show that we outperform strong baselines in both the cases. In both cases we start with a single baseline policy, which is a prior model trained using online RL, which already has a strong performance on this task. We demonstrate the versatility of \alg by iteratively applying \alg on the initial expert, along with all prior policies trained using previous \alg iterations as the next set of baselines. We show that with a limited amount of interaction with the environment (to collect trajectories using each successive set of baselines), we obtain strong policies in a small number of iterations, creating a promising practical recipe for challenging real-world settings.
\end{itemize}

\section{Setting and Related Work}

We now define the problem setting formally, and then discuss some lines of prior work which are relevant to this setting.

\subsection{Problem setting}

\paragraph{Contextual MDP setting.} We consider a contextual Markov Decision Process (MDP) with state space $\mathcal{S}$ and action space $\mathcal{A}$. We denote the initial state distribution $D_1$ and the sampled context (initial state) is $x \sim D$.
Once the context is sampled, the transition kernel $\P_x$ is deterministic,
that is $\P_x(\cdot | s,a)$ is a point-mass distribution. The reward function is $r_x(s,a)$ and we assume deterministic rewards. We note that both the transition kernel and reward kernel are context-dependent. We will omit the context subscript from our notation whenever it does not introduce ambiguity.

We work in the finite-horizon setting and denote the horizon as $H$. The value function of a policy $\pi$ is 
\begin{align*}
    V_{\pi}(x) = \sum_{h=1}^H \E_{a_h\sim \pi(\cdot | s_h)} [r_x(s_h, a_h)],
\end{align*}
where $s_h$ is the the state at step $h$ s.t. $\P_x(s_h | s_{h-1}, a_{h-1}) = 1$ and $s_1 \equiv x$. Importantly, the policy class is such that the action distribution at state $s$ depends only on $s$ and not on the context, that is $\pi(\cdot |s, x) = \pi(\cdot |s)$.

\paragraph{Goal.} We assume that we are given a set of $K$ baselines policies $\{\pi_i\}_{k \in [K]}$ together with $n$ trajectories for each policy, which we denote by $\{\tau_{i,j}\}_{i\in[K], j\in[n]}$, where $x_j \sim D$ and a trajectory for policy $\pi$ consists of $\{(s_h, a_h)\}_{h\in[H]}$, with $a_h\sim \pi(\cdot | s_h)$. For an arbitrary policy $\pi$ and context $x$ we use $\tau_\pi(x)$ to denote the trajectory generated by following $\pi$ on context $x$.
We also assume that we see the total reward for each trajectory and policy, that is for all $i\in[K], j\in[n]$ we only observe
\begin{align*}
    r(\tau_{i, j}) = \sum_{(s_{j,h}, a_{j,h}) \in \tau_{i,j}}r(s_{j,h},a_{j,h}),
\end{align*} 
instead of observing a dense reward across all the time steps of the trajectory. We assume that the rewards are bounded and non-negative, that is $r(\tau) \in [0, B]$ for all trajectories $\tau$ and some constant $B$.

We have access to a policy class $\Pi$ and seek to find a find a policy in $\Pi$ which ideally competes with each of the baselines, and is able to combine their strengths. Let $V_i(x^1) = V_{\pi_i}(x^1)$ denote the expected cumulative reward of baseline $i$, conditioned on the context $x$. Then we seek to minimize the regret:
\begin{equation}
    \reg(\pi) := \E_{x\sim D}\left[\max_{i=1}^K V_i(x) - V_\pi(x)\right].
    \label{eq:regret}
\end{equation}
That is, we seek to compete with the best of the baselines for each individual context. 

\paragraph{Learning setup.} We assume access to the policy class $\Pi$, but do not assume any other function approximators, such as for modeling value functions. This is partly due to the fact that the typical training of value functions using Bellman backups is not feasible in our sparse-reward setting. Furthermore, typical actor-critic techniques make strong completeness and realizability assumptions on the value function class, which are not realistic with a restricted notion of state which we encounter in our motivating problem of compiler optimization. This necessitates the development of purely policy-based methods.

\subsection{Related work}

\paragraph{Vanilla behavior cloning} Behavior cloning~\cite{widrow1964pattern, pomerleau1988alvinn} refers to the approach of learning a policy that matches the mapping from states to actions observed in the data. This is typically solved as a classification problem for deterministic policies, or by maximizing the log-likelihood of the chosen actions in the observed states for stochastic policies. It is unclear how to apply vanilla behavior cloning in the presence of multiple baselines. We will present a natural formulation to behavior clone the best baseline policy per context in the following section.

\paragraph{Value-based improvement upon multiple baselines (MAMBA)} \citet{cheng2020policy} show how to simultaneously improve upon multiple baseline policies to compete with the best policy \emph{at each state} in the MDP, which is a significantly stronger notion that competing with the best baseline in each context only. However, this comes with two caveats. Their method requires value function estimation for the baselines and access to the MDP to execute trajectories under the learner's policy and/or baselines. \citet{barreto2020fast} also study a problem which involves improving over multiple baseline policies, which they title General Policy Improvement. The policy improvement step again requires value function evaluation. We do not assume such access to additional function approximators or the MDP in this work. 

\paragraph{Offline RL:} Without access to the MDP, a natural approach is to consider offline reinforcement learning, with the data collection policy being a mixture of the baselines $\pi_i$, say chosen uniformly. Given the recent results on offline RL to compete with any policy that is covered by the data distribution~\citep{kumar2020conservative, xie2021bellman, zhan2022offline}, we can expect a favorable bound on the regret~\eqref{eq:regret}, since all the baselines have a good coverage under the uniform data collection policy. However, existing offline RL methods with theoretical guarantees are typically based on function approximation, relying on actor-critic or $Q$-learning style approaches and on strong credit assignment using per timestep rewards rather than the aggregated reward of a trajectory. Applying these techniques using policy-based function approximation alone and with aggregated reward feedback is not feasible as we argue through a simple lower bound example.

\section{Algorithm and Regret Bound} 

We now describe our algorithm, \alg, and give an upper bound on the regret it incurs to the best per-context baseline. 

\paragraph{Algorithm.} We describe \alg in Algorithm~\ref{alg:bc}. The basic idea of the algorithm is quite simple. For each context $x_j$ in our dataset, we first choose the trajectory with the highest cumulative reward across all the baselines. Then we use a standard behavior cloning loss to mimic the choice of actions in this trajectory. For a context $x_j, j \in [n]$, we denote $i_j = \argmax_{i \in [K]} r(\tau_{i,j})$, and \alg tries to find a policy $\hat \pi \in \Pi$ that optimizes the following intuitive objective:
\begin{align}
\label{eq:alg_update}
    \hat \pi = \argmin_{\pi \in \Pi} \sum_{j=1}^n \sum_{(s_{j,h}, a_{j,h}) \in \tau_{i_j, j}}\mathbf{1}(\pi(s_{j,h}) \neq a_{j,h}).
\end{align}

\renewcommand{\algorithmicrequire}{\textbf{Input:}}
\renewcommand{\algorithmicensure}{\textbf{Output:}}
\begin{algorithm}
\caption{\alg for cloning best per-context baseline.}
\begin{algorithmic}
\REQUIRE Base policies $\{\pi_i\}_{i\in[K]}$ and policy class $\Pi$.
\ENSURE Policy $\hat\pi \in \Pi$.
\FOR{$j\in [n]$}
    \STATE Sample $x_j \sim D_1$, collect trajectories $\{\tau_{i,j}\}_{j\in[n]}$
    \STATE Compute highest reward policy $\pi_{i_j} = \argmin_{i\in[K]}\sum_{(s_{j,h}, a_{j,h}) \in \tau_{i,j}}r(s_{j,h},a_{j,h})$.
\ENDFOR
\STATE $\hat \pi = \argmin_{\pi \in \Pi} \sum_{j=1}^n \sum_{(s_{j,h}, a_{j,h}) \in \tau_{i_j,j}}\mathbf{1}(\pi(s_{j,h}) \neq a_{j,h})$.
\end{algorithmic}
\label{alg:bc}
\end{algorithm}

One natural question at this point might be that if there are two trajectories with very similar high rewards in a context, can it help to leverage this information rather than only picking the one with the higher reward and cloning it. This is indeed a shortcoming of \alg, and other behavior cloning style approaches. However, we note that we only have access to a trajectory-level reward, and hedging between two very different trajectories can create a very noisy learning setup for the algorithm. In situations where value-functions can be feasibly learned, such information is naturally modeled through the value function which assigns similar future rewards to similarly good actions, but we do not find a natural way for incorporating this information in our setup. 

\paragraph{Performance guarantee for \alg.} We now give a bound on the suboptimality of the policy learned by \alg, relative to the best per-context baseline, in terms of the rewards. The analysis mirrors the standard results for behavior cloning algorithms~\citep{ross2010efficient}. We begin with a realizability assumption which governs how well the best per-context baseline can be approximated using the learner's policy class $\Pi$.
\begin{assumption}
\label{assm:realizability}
Let $\tau^*(x) = \argmax_{\tau_{\pi_i}(x)} r(\tau_{\pi_i}(x))$ be the trajectory with maximum return over all policies $\pi_i, i\in[K]$. There exists $\pi^* \in \Pi$ such that 
\begin{align*}
    \P_{x\sim D}(\tau_{\pi^*}(x) \neq \tau^*(x)) \leq \epsilon.
\end{align*}
\end{assumption}

Here $\P_{x\sim D}(A)$ denotes the probability of an event $A$ under the distribution $D$, which we recall is the distribution over the contexts $x$. The assumption is natural as \alg cannot do a good job of approximating the best per-context baseline when no policy in the policy class has a small error in achieving this task. Note that the assumption does not take rewards into account as \alg only matches the actions of $\tau^*(x)$, and does not reason about the reward sub-optimality of other actions, as is common in behavior cloning setups. Indeed this assumption is unavoidable in our problem setting as we illustrate in the next section.
\begin{theorem}
Under Assumption~\ref{assm:realizability}, after collecting $n$ trajectories from each of the $K$ base policies, Algorithm~\ref{alg:bc} returns a policy $\hat \pi$ with regret at most
\begin{align*}
    \reg(\hat\pi) \leq O\bigg(\epsilon H + \frac{H^2\log(H|\Pi|/\delta)}{n}\bigg),
\end{align*}
with probability at least $1-\delta$.
\label{thm:ub}
\end{theorem}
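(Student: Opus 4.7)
The plan is to bound the regret by the probability that $\hat\pi$ fails to reproduce the best per-context trajectory $\tau^*(\cdot)$, and then to control this probability by analyzing $\hat\pi$ as the empirical minimizer of the behavior cloning loss defined in \eqref{eq:alg_update}.

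First I would establish the reduction $\reg(\hat\pi) \leq O(H)\cdot \P_{x\sim D}(\tau_{\hat\pi}(x) \neq \tau^*(x))$. This follows because the MDP is deterministic given the context $x$: when $\tau_{\hat\pi}(x) = \tau^*(x)$ the per-context regret vanishes, and otherwise it is bounded by the maximum trajectory reward, which is $O(H)$. Moreover, $\tau_\pi(x) = \tau^*(x)$ in a deterministic MDP iff $\pi$ selects the action of $\tau^*$ at every one of the $H$ states visited along $\tau^*$, so a union bound over $h$ gives
\begin{align*}
\P_x(\tau_\pi(x) \neq \tau^*(x)) \leq L(\pi) := \E_x \sum_{h=1}^H \mathbf{1}\big(\pi(s^*_h(x)) \neq a^*_h(x)\big),
\end{align*}
which is precisely the population version of the empirical loss $\hat L(\pi)$ minimized by \alg. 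Crucially, Assumption~\ref{assm:realizability} then implies $L(\pi^*) \leq H\epsilon$, since the inner sum vanishes identically on the $(1-\epsilon)$-probability event $\{\tau_{\pi^*}(x) = \tau^*(x)\}$ and is at most $H$ otherwise.

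The second step is a fast-rate uniform convergence argument at the trajectory level. Writing $\hat L(\pi) = n^{-1}\sum_{j=1}^n \ell_j(\pi)$ with $\ell_j(\pi) = \sum_h \mathbf{1}(\pi(s^*_{j,h}) \neq a^*_{j,h}) \in [0, H]$, the $\ell_j$'s are i.i.d.\ across $j$ and satisfy $\mathrm{Var}(\ell_j(\pi)) \leq H \cdot L(\pi)$. Applying Bernstein's inequality together with a union bound over $\pi \in \Pi$ (and optionally also over the $H$ time steps, which is what contributes the $\log H$ inside the logarithm of the theorem), I obtain that with probability at least $1-\delta$, for every $\pi \in \Pi$,
\begin{align*}
|\hat L(\pi) - L(\pi)| \leq O\!\left(\sqrt{\frac{H L(\pi)\log(H|\Pi|/\delta)}{n}} + \frac{H\log(H|\Pi|/\delta)}{n}\right).
\end{align*}
Combining this with the ERM optimality $\hat L(\hat\pi) \leq \hat L(\pi^*)$ and absorbing cross terms via AM-GM leads to $L(\hat\pi) = O\!\left(L(\pi^*) + H\log(H|\Pi|/\delta)/n\right) = O\!\left(H\epsilon + H\log(H|\Pi|/\delta)/n\right)$. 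Plugging this into the reduction from the previous paragraph yields the claimed regret bound.

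The main technical subtlety is the within-trajectory correlation of the $H$ state-action pairs in a single $\tau^*(x_j)$, which precludes simply treating the problem as classification on $nH$ independent examples. Keeping the concentration at the trajectory level, where the $n$ per-trajectory losses are i.i.d.\ in $[0, H]$, and exploiting the variance bound $\mathrm{Var}(\ell_j(\pi)) \leq H\cdot L(\pi)$, is what converts the approximation error $L(\pi^*) \leq H\epsilon$ into the fast statistical rate $O(H\log(H|\Pi|/\delta)/n)$ rather than the slow $O(H\sqrt{\log(H|\Pi|/\delta)/n})$ that a naive Hoeffding argument would yield.
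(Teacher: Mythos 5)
Your proposal is correct and follows essentially the same route as the paper's own proof: reduce the regret to the expected number of per-step disagreements with the best per-context trajectory $\tau^*(x)$, bound the comparator's loss by $\epsilon H$ via Assumption~\ref{assm:realizability}, and control the ERM error with a trajectory-level Bernstein bound (variance at most $H$ times the mean) plus a union bound over $\Pi$, giving the fast $H\log(|\Pi|/\delta)/n$ rate. The only difference is cosmetic bookkeeping in the final multiplication by the $O(H)$ per-deviation cost, which matches the paper's own handling of that step.
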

\begin{proof}
Recall the definitions of $\tau^*(x)$ from Assumption~\ref{assm:realizability}, and let $\pi^* = \argmin_{\pi\in\Pi} \E_{x\sim D}(\sum_{h=1}^H \mathbf{1}(\pi(s(x)) \ne a(x)))$ where $(s_h(x), a_h(x))_{h=1}^H = \tau^*(x)$ form the best trajectory for $x$ among the baseline policies. Under Assumption~\ref{assm:realizability}, we know that $\E_{x\sim D}(\sum_{h=1}^H \mathbf{1}(\pi(s(x)) \ne a(x))) \leq \epsilon H$. Let us define
\begin{align*}
    \hat{A}(\pi) =& \sum_{j=1}^n \sum_{h=1}^H \mathbf{1}(\pi(s_{j,h}) \neq a_{j,h}),\\
    A(\pi) =& \E_x\left(\sum_{h=1}^H \mathbf{1}(\pi(s_h(x))\neq a_h(x))\right).
\end{align*}
Clearly we have that $\E[\hat{A}(\pi)] = A(\pi)$ for any fixed policy $\pi$, and $\hat{A}(\pi) = \sum_{j=1}^n Z_{j}(\pi)$ with $Z_{j}(\pi) = \sum_{h=1}^H \mathbf{1}(\pi(s_{j,h}) \neq a_{j,h}) \geq 0$. We note that the $Z_j$ are i.i.d., with $\E[Z_j(\pi)] = A(\pi)$ and $\E[Z_j(\pi)^2] \leq H\E[Z_j(\pi)] = H\,A(\pi)$. Then by Bernstein's inequality combined with a union bound over policies, we have with probability at least $1-\delta$, for all $\pi \in \Pi$:
\begin{align*}
    |\hat{A}(\pi) - nA(\pi)| \leq& \sqrt{nH\,A(\pi)\log(2|\Pi|/\delta)} + H\log(2|\Pi|/\delta)\\
    \leq& \frac{nA(\pi)}{2} + \frac{3}{2}H\log(2|\Pi|/\delta).
\end{align*}

Applying the inequality with $\pi = \hat{\pi}$ and $\pi = \pi^*$, we obtain

\begin{align*}
    A(\hat{\pi}) \leq& \frac{2}{n}\hat{A}(\hat{\pi}) + \frac{3H\log(2|\Pi|/\delta)}{n}\\
    \frac{1}{n}\hat{A}(\pi^*) \leq& \frac{3}{2}A(\pi^*) + \frac{3}{2}\frac{H\log(2|\Pi|/\delta)}{n}.
\end{align*}
Scaling the second inequality by 2 and adding them yields

\begin{align}
    A(\hat{\pi}) \leq 3A(\pi^*) + \frac{6H\log(2|\Pi|/\delta)}{n} \leq 3\epsilon + \frac{6H\log(2|\Pi|/\delta)}{n},
    \label{eq:0-1bound}
\end{align}
where the second inequality follows by Assumption~\ref{assm:realizability}. 

Now we note that for any policy $\pi$:
\begin{align*}
    \reg(\pi) &= \E_x [\max_i V_i(x) - V_{\pi}(x)] = \E_x[r(\tau(x)) - r(\tau_\pi(x))]\\
    &\leq \sum_{h=1}^H (H-h)\E_{x}[\mathbf{1}(\pi(s_h(x)) \neq a_h(x))|].
\end{align*}

Plugging the bound from Equation~\ref{eq:0-1bound} into the inequality above completes the proof.
\end{proof}

\paragraph{Implementation details}
In practice we can not directly compute $\hat \pi$ as defined in Equation~\ref{eq:alg_update}. Instead we solve a proxy to the optimization problem by replacing the indicator function $\mathbf{1}(\pi(s_{j,h}) \neq a_{j,h})$ by the cross-entropy loss. Let $y_{j,h} \in \{0,1\}^A$ be the indicator with only entry equal to $1$ the one which corresponds to the action $a_{j,h})$, and all other entries equal to $0$. Further, we assume that all $\pi \in \Pi$ are such that $\pi(s) \in \Delta^{A-1}$, that is each $\pi(s)$ represents a distribution over the actions that policy $\pi$ plays when in state $s$. We then use a first order method to minimize the loss
\begin{align*}
    \min_{\pi\in\Pi}\sum_{j=1}^n w_j\sum_{h=1}^H \sum_{a} -y_{j,h}(a)\log(\pi(S_{j,h})),
\end{align*}
where $w_j \in [0,1]$ are example weights which we find helpful in our practical implementation. We refer the reader to our experimental evaluation for mode details on how the weights are induced.

\section{Lower bounds}

In this section, we show a series of lower bounds which illustrate the necessity of various aspects of our guarantee in Theorem~\ref{thm:ub}. We start with the necessity of Assumption~\ref{assm:realizability}

\paragraph{Necessity of Assumption~\ref{assm:realizability}} Let us consider a contextual multi-armed bandit problem, meaning that we fix $H = 1$. For any $\epsilon$, we choose the context space $\cS = [M]$ for $M = \lceil\frac{1}{\epsilon}\rceil$, and choose $D$ to be the uniform distribution on $\cS$. We fix $\cA = \{a_1, a_2, a_3\}$, and $K=1$ with the data collection policy $\pi_1$ choosing $a=a_1$ for each context $x\in\cS$. We consider two possible environments, defined through rewards $r_1, r_2$. For $a_1$, we have $r_1(x, a_1) = r_2(x, a_2) = 1$. For the other two actions, we have $r_1(x,a_2) = 0$ and $r_1(x, a_3) = 1$, while the second environment has $r_2(x,a_2) = 1$ and $r_2(x, a_3) = 0$. We design a policy class $\pi$ with two policies $\{\pi_1, \pi_2\}$ such that $\pi_1(x) = \pi_2(x) = a_1$ for all $x\ne 1$ and $\pi_1(1) = a_2$m $\pi_2(1) = a_3$. Clearly this policy class satisfies Assumption~\ref{assm:realizability}. But it also contains an optimal policy for both the rewards $r_1, r_2$ with a regret equal to $0$. However, since the data contains no information about which one of $r_1$ or $r_2$ generated the data, the best we can do is to pick between $\pi_1$ and $\pi_2$ uniformly at random, and incur a regret of at least $0.5\epsilon$. This argument shows that we cannot replace the $0-1$ loss for measuring the accuracy of a policy in Assupmtion~\ref{assm:realizability}, with a more reward-aware quantity. It is also evident from the example that we cannot avoid incurring a regret of $\Omega(\epsilon)$.

\paragraph{Necessity of the comparator choice} As discussed earlier, we define regret relative to the best baseline per-context, but the broader literature on offline RL allows stronger benchmarks, such as the best policy covered by the data generating policies. To understand the difference between the two, we consider the case of $H=2$ and $K=2$, with $\cS = \{x\}$ being a singleton. Suppose we have two actions $\cA = \{a_1, a_2\}$ and the baselines choose the trajectories $\pi_1(x) = a_1$ and $\pi_2(x) = a_2$ at both $h=1,2$. There are two possible reward functions given by $r_1((a_1, a_1)) = r_1((a_2, a_2)) = 0.5$, $r_1((a_1, a_2)) = 1$, $r_1((a_2, a_1)) = 0$ and $r_2(\tau) = 1-r_1(\tau)$ for all trajectories $\tau$. Now we observe that in the sense of the coverage studied in the offline RL literature, all four trajectories are covered by the dataset, since we get to observe both the actions at both the steps in the episode. However, since the data contains no useful information to distinguish between $r_1$ and $r_2$, no learning method can pick a covered policy which is better than our benchmark of best baseline per-context. This challenge arises in our scenario as we only observe the aggregate reward over a trajectory, which makes per-step credit assignment used in standard offline RL methods through the use of Bellman errors infeasible.

\paragraph{Tightness of horizon factor}
The tightness of the horizon factor follows from a simple reduction to Theorem 6.1 in \citet{rajaraman2020toward}, who show that in a finite horizon episodic MDP, there is no algorithm which only observes $n$ optimal policy trajectories and returns a policy with regret better than $\Omega\left(\frac{SH^2}{n}\right)$. The MDP contructed by \citet{rajaraman2020toward} for the lower bound has a non-deterministic transition kernel $\P$. Since we are assume that the transitions are deterministic, we instead use the randomness in sampling contexts, $x\sim D$ to simulate $\P$. Concretely, let $\xi_1,\ldots, \xi_H$ be the random bits sampled at the $H$ steps in a fixed episode from the construction in \citet{rajaraman2020toward}. We define $x = (\xi_1,\ldots,\xi_H)$ and set $\P_x (s_{h+1} | s_h, a_h) = \P(s_{h+1} | s_h, a_h, \xi_h)$ to be the trasition taken for the realization of $\xi_h$. The size of the state space of contextual MDP with the above transition kernel can be set to $S = \Theta(\log_2(|\Pi|)$ and the action space to $\cA = \{a_1, a_2\}$, so that each policy in $\Pi$ is encoded by how it acts on the state space. This construction directly leads to the following lower bound.

\begin{theorem}
For any number of samples $n$ there exists a family of contextual MDPs with disribution over contexts given by $D_1$, such that the policy $\hat \pi = \texttt{A}(\{\tau_{\pi^*}(x_i)\}_{i=1}^n)$ returned by any algorithm $\texttt{A}$ satisfies
\begin{align*}
    \E_{x}[V_{\pi^*}(x)] - \E_x[V_{\hat\pi}(x)] \geq \min\left\{H, \frac{\log_2(|\Pi|)H^2}{n}\right\}.
\end{align*}
\end{theorem}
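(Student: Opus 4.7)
The plan is to complete the reduction sketched just before the theorem statement, lifting Theorem~6.1 of \citet{rajaraman2020toward} into our contextual-MDP setting. Their result gives, for a family of finite-horizon episodic MDPs with a non-deterministic transition kernel $\P$, state space of size $S$, and binary action set, that any algorithm observing only $n$ optimal trajectories must suffer regret $\Omega(SH^2/n)$ on some instance. The only apparent obstacle to plugging this in directly is that our framework demands deterministic transitions, which we remove by absorbing all transition randomness into the initial context.

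First, instantiate the hard construction of \citet{rajaraman2020toward} with state-space size $S = \Theta(\log_2|\Pi|)$, calibrated so that the deterministic policies on $\cS$ with actions $\{a_1, a_2\}$ number $2^S = \Theta(|\Pi|)$. Next, define the contextual MDP as follows: let $\xi_1, \ldots, \xi_H$ denote the per-step random bits consumed by $\P$, set the context to $x = (\xi_1, \ldots, \xi_H)$ with the product distribution $D$ they induce, and define $\P_x(s_{h+1}\mid s_h, a_h) = \P(s_{h+1}\mid s_h, a_h, \xi_h)$, which is a point mass once $x$ is fixed. Rewards are copied verbatim. The key invariant to verify is that sampling $x \sim D$ and rolling out $\pi$ in the contextual MDP induces the same joint law on $(s_h, a_h)_{h=1}^H$ as rolling out $\pi$ in the stochastic MDP, so that the learner's observations $\{\tau_{\pi^*}(x_i)\}_{i=1}^n$ are identically distributed to $n$ optimal trajectories in the original lower-bound construction.

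With the reduction in place, regret preservation is immediate: by Fubini, $\E_x V_\pi(x)$ in the contextual MDP equals the standard value of $\pi$ in the stochastic MDP for every $\pi$, so $\E_x[V_{\pi^*}(x)] - \E_x[V_{\hat\pi}(x)]$ coincides with the regret that Theorem~6.1 of \citet{rajaraman2020toward} lower bounds by $\Omega(SH^2/n) = \Omega(\log_2(|\Pi|) H^2 / n)$. Taking a minimum with the trivial $H$ upper bound, valid because per-step rewards are bounded and so values lie in $[0, H]$, then yields the stated inequality.

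The hard part is essentially bookkeeping around the paper's convention $s_1 \equiv x$: one has to be careful that the Markov state carried through the transitions is a state of the original stochastic MDP rather than the full noise vector, which would inflate the apparent state space. This can be arranged by letting $x$ factor as a pair (initial state, noise bits) and treating the noise coordinates as exogenous signals consumed only by $\P_x$, or equivalently by moving to a slightly enlarged context space without affecting the $\log_2|\Pi|$ scaling. Once this is handled, the theorem follows with no further computation from the cited result.
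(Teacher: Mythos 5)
Your proposal is correct and follows essentially the same route as the paper: the paper's own argument is precisely this reduction to Theorem~6.1 of \citet{rajaraman2020toward}, encoding the per-step random bits $\xi_1,\ldots,\xi_H$ into the context $x$ so that $\P_x$ becomes deterministic, choosing $S = \Theta(\log_2|\Pi|)$ with binary actions so policies in $\Pi$ are identified with their behavior on the state space, and transferring the $\Omega(SH^2/n)$ regret bound unchanged. Your additional care about keeping the Markov state separate from the noise vector is a reasonable bookkeeping refinement but does not change the argument.
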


\section{Case study: Optimizing a compiler's inlining policy}
\subsection{The inlining for size problem}
In short the inlining problem which we study in our experiments consists of deciding to inline or not to inline a callsite in a program with the goal of minimizing the size of the final program binary. We omit most compilation details and just give a brief overview which should be sufficient for understanding the problem from a RL perspective. In our specific scenario, compilation is split into a frontend (fe) and a backend (be). 
In our setup the frontend consists of translating the program into an Intermediate Representation (IR), doing some frontend optimizations,
then a (thin) link step follows, which re-organizes functions in the various modules to improve inlining opportunities. For more details on the linking step see~\citet{johnson2017thinlto}.
The backend compilation follows the thin link step and is applied on the updated modules. It consists of further optimizations, including inlining decisions, final linking and lowering the IR to machine code, e.g., x86, ARM, etc. The IRs with which our RL algorithms work with are post frontend linking and pre backend optimization, that is we work only on backend optimizations.
We note that a program is made up of multiple \emph{modules}.
In the fe, a module corresponds to a single C/C++ source file, after all the preprocessor directives have been applied. In the be, the module would consist of a mix of IRs from different fe modules.
The inlining decisions will be taken at callsites in the IRs of each module, where the callee is also in the same module.
We note that each module is ultimately compiled to a machine code-specific binary, with its own size, that will still need to be linked into the final executable.
Hence we treat each module as a context $x$ in our contextual MDP setting, and the value $V_i(x)$ of the baseline policy $\pi_i$ is the size of the binary we get for module $x$, when we make inlining decisions for the module according to $\pi_i$. 
\begin{equation}
\label{eq:compiler_workflow}
    \begin{aligned}
        &\texttt{Program} \to \texttt{IRs} \to \texttt{\textbf{fe optimizations}}\\
        \to &\texttt{ThinLinking\footnotemark} \xrightarrow{\texttt{\textbf{Collecting IRs}}} \texttt{\textbf{be optimization}}\\
        \to &\texttt{Final linking} \to \texttt{x86}
    \end{aligned}
\end{equation}
\footnotetext{See~\citet{johnson2017thinlto}}
In Equation~\ref{eq:compiler_workflow} we outline the compilation process, together with the step at which we collect IRs from the respective modules to be used in our RL algorithms. It is important to note that the learned RL policy will make inlining decisions both at the fe optimization and be optimization parts in Equation~\ref{eq:compiler_workflow}, however, the IRs for training are only collected after the fe optimization step.

The contextual MDP setting can now be tied together with the compilation process as follows. Each context is a module as mentioned above, with state-space defined by its IR. Each state corresponds to a callsite in the IR of the module $x$, and the action set is $\{\texttt{inline}, \texttt{ don't inline}\}$ or $\{1, 0\}$ respectively. Each $\pi_i$ is some base inlining policy and $V_i(x)$ is defined by the size of the compiled stand-alone module $x$. It is important to note that there is a mismatch between trying to maximize $\E_{x\sim D}[V_{\pi}(x)]$ and the overall goal of minimizing the binary size, as it is not necessarily true that the sum of module sizes equals the size of the binary. In fact, part of the post-inlining \texttt{be} and linker optimizations may introduce a significant distribution shift between the sum of module sizes and the size of the final binary. In our experiments, we try to minimize this distribution shift by turning off certain optimizations. For more details on the compilation pipeline we refer to \citet{trofin2021mlgo}.

We note that the entire process is fully deterministic, as we assumed in our theoretical setup, since the compiler is a deterministic program. 

\subsection{Dataset collection}
\label{sec:data_collection}
We train and evaluate on two sets of binaries. In the first experiment we train on a proprietary search binary and evaluate the model on a different proprietary set of targets that are part of a cloud systems infrastructure. These targets need to be installed on a fixed size partition of each cloud machine and hence are size-constrained.
In the second experiment we train and evaluate on the Chrome binary on Android. Training proceeds in two separate steps, which we repeat over several iterations. The two steps can be summarized as follows, first we collect a training dataset which consists of trajectories with smallest size over all base policies available at the current iteration. Next, we train a new base model using the objective defined in Equation~\ref{eq:alg_update}. This conceptually applies Algorithm~\ref{alg:bc} repeatedly, where the set of baseline policies is updated at each iteration to include the new policy obtained from the previous iteration. We now describe each step carefully.
\paragraph{Training dataset collection.}
The dataset collection begins by creating a corpus of IRs of modules which make up the final binary. The corpus creation follows the work of \citet{trofin2021mlgo, grossman2023compile} and uses tools for extracting the corpus are available on GitHub\footnote{A detailed example can be found at \url{https://github.com/google/ml-compiler-opt/blob/main/docs/inlining-demo/demo.md}}. The corpus is created at the beginning of training and remains the same throughout every iteration. Training begins under the assumption that there exists at least one base policy. In the first iteration a training dataset is collected from this initial base policy $\pi_1$, next, $\pi_1$ is behavior cloned by solving the optimization problem in Equation~\ref{eq:alg_update}. We specify how the weights for the objective are computed in the following sections as they are different for the different targets. Let $\hat \pi_2$ denote the resulting policy after solving Equation~\ref{eq:alg_update}. This policy is non-deterministic and so we construct the base policy $\pi_2$ by setting $\pi_2(s) = \argmax_{a \in \{0,1\}} \hat \pi_2(s,a)$, that $\pi_2$ always plays the most likely action according to $\hat \pi_2$. This concludes the first iteration.
More generally, if we have a larger initial set of baseline policies than just a singleton$\{\pi_1\}$, the iterations proceed similarly. However, instead of just using $\pi_1$, we use the full set $\{\pi_i\}_{i\in[K]}$ of baseline policies at every iteration at the first iteration.

Proceeding this way, at the $t$-th iteration the set of base policies is taken as a subset of $\{\pi_1,\ldots,\pi_{t-1}\}$ which always contains $\pi_1$ (or the larger set of all initial baselines). Then we again invoke \alg with these baseline policies, and obtain a new randomized policy $\hat \pi_t$, and we refer to $\pi_t$ as the corresponding deterministic greedy policy.
When collecting a new training dataset we not only collect trajectories with the chosen subset of base policies but we also may force exploration by using $\hat \pi_{t-1}$ in the way discussed next.

\paragraph{Exploration in training dataset collection.}
For a fixed module $x$ and a policy $\pi$, we choose a ceiling on the number of exploration steps as a hyper parameter, which is a fraction of the length of the trajectory $|\tau_{\pi_1}(x)|$. The call-sites at which exploration occurs are selected as follows. The first exploration call-site is selected as $\tilde h = \argmin_{h}\{|\hat \pi(s_h)(0) - \hat \pi(s_h)(1)|\}_{S_h \in \tau_{\pi}(x)}$, as the call-site where the exploration policy $\hat \pi$ is the least confident about the action to choose. The exploration step is then played at $s_{\tilde h}$ by taking the action $1 - \pi(s_{\tilde h})$ (recall that $\pi(s) \in \{0,1\}$), and the remaining steps in the trajectory are completed by playing according to $\pi$. Let $\hat \tau$ denote the trajectory from the last round of exploration. In the following exploration round the exploration step is selected as the step $h$ at which the gap, $|\hat \pi(s_h)(0) - \hat \pi(s_h)(0)|$, is smallest among all $h > \tilde h$, where $\tilde h$ is the exploration step in the previous round. Once the maximum number of exploration rounds is reached or the exploration step reaches the end of the trajectory, we return the trajectory which results in the smallest module size among all explored trajectories.
Pseudo-code is presented in Algorithm~\ref{alg:explore_module}.
The exploration strategy is governed by $\hat \pi_{t-1}$, however, it can be updated by using the non-deterministic policy $\hat \pi$ which induces $\pi$. We leave such approaches as future work, as we have already observed significant benefit of using only $\hat \pi_{t-1}$ as the exploration policy.

\begin{algorithm}
\caption{Explore module}
\begin{algorithmic}
\REQUIRE Base policy $\pi$, exploration policy $\hat\pi$, module $x$, maximum exploration steps $T$.
\ENSURE Compilation trajectory $\tau_{\pi}(x)$ with reward $r_{\pi,x}$.
\STATE Compute vanilla trajectory $\tau_\pi(x)$ by compiling with $\pi$ and receive reward $r^1_{\pi,x}$
\STATE $t=1$
\STATE $\hat \tau_1 = \tau_\pi(x)$
\STATE $\tilde h_1 = \argmin_{h}\{|\hat \pi(S_h)(0) - \hat \pi(S_h)(1)|\}_{S_h \in \hat \tau_1}$
\WHILE{$t\leq T$}
    \STATE Replay $\hat \tau_t$ until $\tilde h_t$
    \STATE Play $1 - \pi(S_{\tilde h_t})$ at $\tilde h_t$
    \STATE Complete trajectory $\hat \tau^{t+1}$ by playing $\pi$
    \STATE Receive reward $r^{t+1}_{\pi,x}$
    \IF{$\tilde h_t < |\hat \tau_{t+1}|$}
    \STATE $\tilde h_{t+1} = \argmin_{h > \tilde h_t}\{|\hat \pi(S_h)(0) - \hat \pi(S_h)(1)|\}_{S_h \in \hat \tau^{t+1}}$
    \ELSE
    \STATE \textbf{break}
    \ENDIF
\ENDWHILE
\STATE $t^* = \argmax_{t} r^t_{\pi, x}$, $r_{\pi, x} = r^{t^*}_{\pi, x}$, $\tau_{\pi}(x) = \hat\tau_{t^*}$
\end{algorithmic}
\label{alg:explore_module}
\end{algorithm}

\paragraph{Online versus offline learning.} Our theoretical setup frames the problem in an offline learning scenario, yet Algorithm~\ref{alg:explore_module} and the iterative procedure do rely on our ability to interact with the environment in an adaptive manner. Note, however, that the modality of interaction used in our approach is quite different, and significantly more practical than full-fledged online RL. Each round of policy learning, which happens using \alg, is fully offline. This process, which involves a large number ($10^5-10^6$) stochastic gradient steps, happens without any interaction with the environment, and is where the bulk of the learning happens. Subsequently, we form a new data collection policy for the next iteration, and this policy is applied to collect one trajectory per module. The data collection process does not involve any policy updates, and hence is massively parallelizable with no interlocking bottlenecks with the learning process. In online RL, on the other hand, data collection and policy updating go hand-in-hand, which typically requires significantly more complex architecture~\citep{mnih2016asynchronous} to scale to domains where data collection is expensive. Our approach, on the other hand, simply requires interleaving standard supervised learning and batch data collection, which is quite desirable especially in the compiler application, where the ML training happens on GPUs, while the compilation happens on CPU machines. 

\subsection{Search application targets}
\label{sec:ker_res}
Similarly to \citet{trofin2021mlgo} we collect a corpus for training purposes from a search application binary with approximately $30000$ modules. The initial base policy is an RL model trained using an Evolutionary Strategy (ES\footnote{The policy can be found here: \url{https://github.com/google/ml-compiler-opt/releases/tag/inlining-Oz-v1.1}}) as in \citet{trofin2021mlgo}. After collecting a training dataset with the ES policy we noticed that the distribution of sizes of modules is fairly non-uniform, with few modules having very large sizes or very small sizes and majority of modules being somewhere in-between. Because we expect that the actions of the behavior cloning policy taken on larger size modules are more important for size saving we upweight the actions in such trajectories. The weights used in training are computed as follows. Let $\texttt{size}(x, \pi_1)$ denote the size of module $x$ from the collected trajectory under policy $\pi_1$ (or in the case of multiple baseline policy under the best baseline policy). The modules are partitioned into buckets according to their sizes where the limits of the buckets are taken to be on exponentially scaling grid, that is the first bucket contains all modules with size $\texttt{size}(x, \pi_1)\in [0, 2^0)$, the second bucket all modules such that $\texttt{size}(x, \pi_1) \in [2^0, 2^1)$ etc., up to the final bucket with size $[2^{M-1}, 2^{M})$. Let $b_m = \{x : \texttt{size}(x, \pi_1) \in [2^{m-1}, m)\}$ denote the $m$-th bucket and let $m(x)$ be the $m$ for which $x \in b_{m(x)}$. The weight $w_x$ for module $x$ is computed as follows.
\begin{align*}
    w_x = \frac{\max_m |b_m|}{|b_{m(x)}|}.
\end{align*}

\begin{algorithm}
\caption{\alg for cloning best per-context baseline with exploration}
\begin{algorithmic}
\REQUIRE Base policy $\pi_1$ and policy class $\Pi$. Max exploration steps $T$. Max number of iterations $N$. 
\ENSURE Policy $\hat\pi \in \Pi$.
\STATE $l = 1$
\WHILE{$l \leq N$}
\FOR{$j\in [n]$}
    \STATE Sample $x_j \sim D_1$
    \FOR{$i \in \{\pi_s\}_{s \leq l}$}
        \STATE $r_{i,j}, \tau_{i,j} = \texttt{Algorithm~\ref{alg:explore_module}}(\pi_i, \hat \pi_\ell, x_j, T)$
    \ENDFOR
    \STATE Compute highest reward policy $\pi_{i_j} = \argmin_{i\in[K]}\sum_{(s_{j,h}, a_{j,h}) \in \tau_{i,j}}r_{i,j}(s_{j,h},a_{j,h})$.
\ENDFOR
\STATE Compute module weights $\{w_j\}_{j\in [n]}$ (See Sect.~\ref{sec:ker_res},\ref{sec:android}).
\STATE $$\hat \pi_l = \argmin_{\pi \in \Pi} -w_j\sum_{j=1}^n \sum_{(s_{j,h}, a_{j,h}) \in \tau_{i_j,j}}a_{j,h}\log(\pi(a_{j,h}|s_{j,h}))$$
\STATE $\pi_l(s) = \argmax_{a}\hat \pi_l(a|s), \forall s\in S$
\ENDWHILE
\end{algorithmic}
\label{alg:bce}
\end{algorithm}

We train two sets of policies, one set is trained without exploration and is precisely in line with Algorithm~\ref{alg:bc}. For full pseudo-code, which includes the exploration step, we refer the reader to Algorithm~\ref{alg:bce}.
The second set is trained with exploration as described in Section~\ref{sec:data_collection}.
\begin{figure}[ht]
    \centering
    \includegraphics[width=0.5\textwidth]{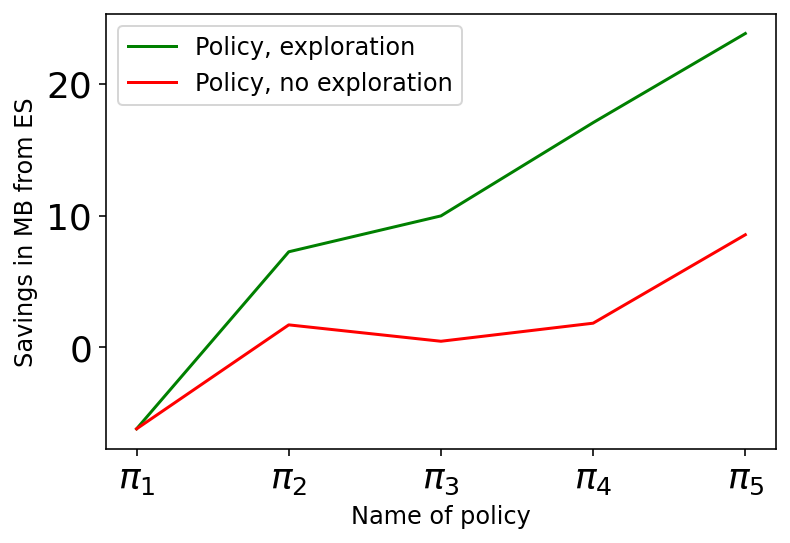}
    \caption{Savings in MB from ES on training binary}
    \label{fig:training_plot}
\end{figure}
In Figure~\ref{fig:training_plot} we show savings of the trained policies to $\pi_1$, which is the ES policy, on the search binary from which the training dataset is collected. In Figure~\ref{fig:test_plot} we show the savings on a \emph{different test binary}. On the $x$-axis of the figures we show the size savings of the policy $\pi_i$ learned at each iteration $i$, with and without exploration respectively, where $bc_0$ is the behavior cloned policy from ES.
Both figures demonstrate the success of our approach in improving significantly beyond the initial baseline, as well as the benefits from multiple iterations of the process. Furthermore, the gap between the lines with and without exploration highlights the benefits of the added exploration. 

We note that the compilation for both the training and test binaries is carried out in the following way to minimize the distribution shift -- the fe optimizations are carried out by ES, while the be optimizations are carried out by the trained policies. If we were to use the trained policies in both fe and be, this might lead to significant distribution shift, as Algorithm~\ref{alg:bc} works only on trajectories collected after the fe optimizations for which ES is always used. That is, if any of the trained policies, $bc_i$, act very differently on the fe, compared to ES, the resulting IRs before the be optimization might be completely different from the training set IRs, and hence the trained policy might take very sub-optimal actions. 
\begin{figure}[ht]
    \centering
    \includegraphics[width=0.5\textwidth]{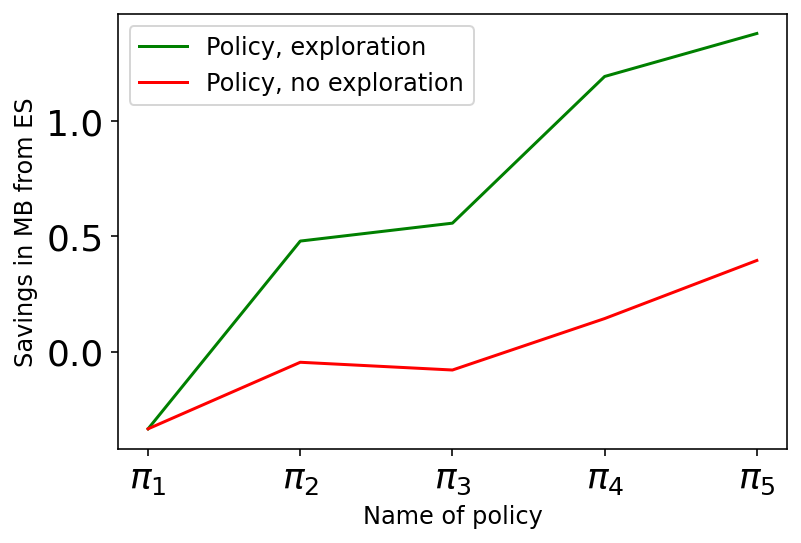}
    \caption{Savings in MB from ES on test binary}
    \label{fig:test_plot}
\end{figure}

\subsection{Chrome on Android}
\label{sec:android}
In our second set of experiments we train an RL policy for Chrome on Android.
The training and test binaries are the same in this case. The base policy with which we start is an RL policy trained using Proximal Policy Optimization (PPO\footnote{The policy can be found here: \url{https://commondatastorage.googleapis.com/chromium-browser-clang/tools/mlgo_model2.tgz}}) \citep{schulman2017proximal} as done in \citet{trofin2021mlgo}.
There are two differences in training from Section~\ref{sec:ker_res}. First, we focus only on the setting where we do exploration. The second difference in training is how the weights for the objective in Equation~\ref{eq:alg_update} are formed. The approach for computing the weights used here is inspired by the fact that we want to improve on PPO in each module and not just on the sum of module sizes. That is we want to maximize the size savings over the worst case module in our dataset. The following approach is natural when such max-min guarantees are desired.

Reusing notation from Section~\ref{sec:ker_res} we let
\begin{align*}
    p_x^1 &= \frac{|b_{m(x)}|}{\sum_{m} |b_m|}\\
    w_x^1 &= \frac{\max_m p_x^1}{p_{m(x)}^1},
\end{align*}
be the weights in the first iteration of training. In following iterations the weights are set as $w_x^t = \frac{\max_m p_x^t}{p_{m(x)}^t}$, where $p^t$ is update using the Hedge algorithm \citep{littlestone1994weighted}. The update uses the sum of sizes in each bucket as losses, normalized by the $\ell$-infinity norm, that is
\begin{align*}
    \tilde L^t_m &= \sum_{x \in b_m} \texttt{size}(x, \pi_t)\\
    L^t_m &= \frac{\tilde L^t_m}{\|L^t\|_\infty},
\end{align*}
where $L_m^t$ denotes the $m$-th coordinate of the loss vector $L^t$.
The Hedge update is then
\begin{align*}
    \tilde p^{t+1}_m &= p^{t}_m\exp(-\eta L^t_m)\\
    p^{t+1}_m &= \frac{\tilde p^{t}_m}{\sum_{m} \tilde p^t_m}.
\end{align*}
\begin{figure}
    \centering
    \includegraphics[width=0.45\textwidth]{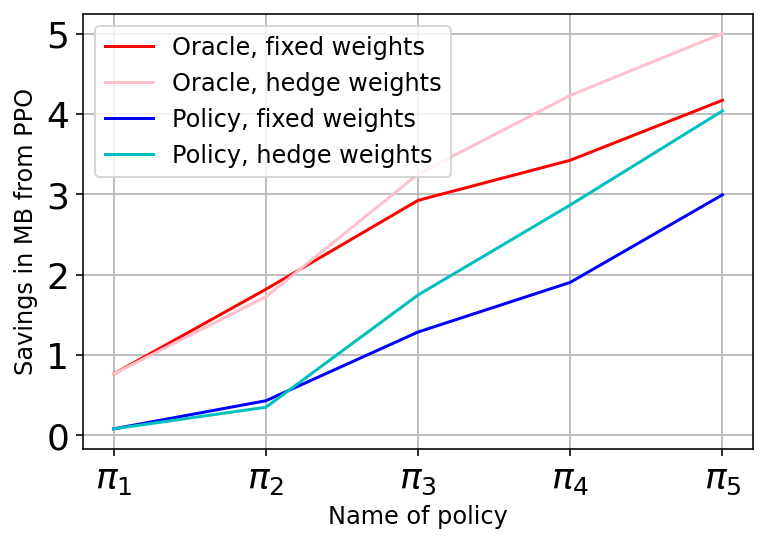}
    \caption{Savings in MBs from PPO on sum of module sizes}
    \label{fig:modules}
\end{figure}
\begin{figure}
    \centering
    \includegraphics[width=0.45\textwidth]{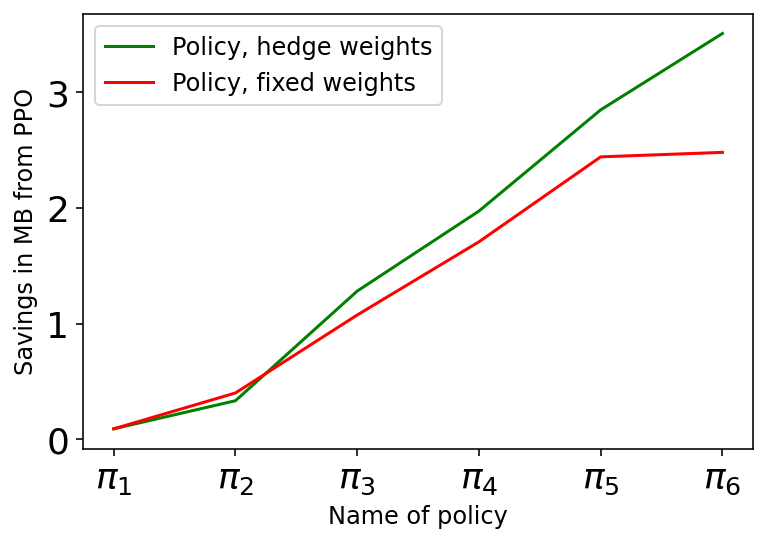}
    \caption{Savings in MBs from PPO on binary size}
    \label{fig:binary}
\end{figure}
In Figures~\ref{fig:modules} and \ref{fig:binary}, we plot the savings of our learned policies across iterations, relative to the initial PPO policy, measured in two different ways. For Fig~\ref{fig:modules}, we simply add up the sizes of the binaries produced by compiling each module in our training dataset. This is a clean metric, as the distribution shift between training and evaluation is small, and no artifacts from linker or post-inlining \texttt{be} optimizations are introduced in the evaluation. As we see, we improve rapidly beyond the PPO policy with the iterative applications of \alg. Note that even the sum of module sizes suffers from the typical distribution shift between online and offline RL, since the data used from behavior cloning is collected using a different policy than the one we apply in evaluation. For the sum of module sizes metric, we can study the effect of this distribution shift rather carefully by also compiling with an \emph{oracle} policy, which simply chooses the best baseline policy for each module, which is the target for training in \alg. This oracle, shown in red in Figure~\ref{fig:modules} naturally provides larger gains relative to PPO than our learned policy as expected, but the gap reduces through the iterations of our process, indicating that the policies tend to stabilize through iterations, and the training data for later applications of \alg is closer to on-policy data. We note that the oracle changes between different instantiations of our weights. This is because the $i$-th trained policy $\pi_i$ depends on the choice of weights and so the oracle after the $i$-th iteration which chooses the best among $\{\pi_i\}_{\ell=1}^i$ also depends on the choice of weights.
We also note that the gap between the learned and oracle policy's performance is smaller when we use the Hedge weights, and that the weighted version has a bigger gain relative to PPO, showing the efficacy of this approach.

Finally, in Figure~\ref{fig:binary} we present the savings in size of the Chrome on Android binary, which is the actual yardstick. Here we cannot easily evaluate the size of the oracle, so we only compare our policies to PPO, and again observe impressive gains, with the Hedge-weighted variant doing better. The binary size when compiled with the PPO policy is approximately $213.32$ MB.

\section*{Acknowledgements}
We would like to thank Ziteng Sun for great discussions on the lower bound.

\bibliography{example_paper}
\bibliographystyle{icml2024}
\end{document}